
\documentclass[letterpaper, 10 pt, conference]{ieeeconf}  

\IEEEoverridecommandlockouts                              

\overrideIEEEmargins                                      



\newcommand{\roa}{\texttt{RoA}}
\newtheorem{thm}{Theorem}[section]


%


%


\newcommand{\cB}{{\mathcal B}}

\newcommand{\cD}{\mathcal{D}}

\newcommand{\cF}{{\mathcal F}}

\newcommand{\cZ}{{\mathcal Z}}


\newcommand{\sR}{{\mathsf R}}


\newcommand{\sCG}{\mathsf{CG}}

    \newfont{\mbf}{msbm10 scaled 1100}


\newcommand{\sMG}{{\mathsf{ MG}}}

\usepackage[noadjust,compress]{cite}
\usepackage{graphics} 
\usepackage{graphicx} 
\usepackage{amsmath} 
\usepackage{amssymb}  
\usepackage{amsfonts}
\usepackage{color}
\usepackage{lipsum}
\usepackage{pgfplots}
\usepackage{subcaption}
\pgfplotsset{compat=1.11} 
\usepackage{xcolor}
\usepackage{overpic}
\usepackage{multicol}
\usepackage{multirow}
\usepackage{soul}
\usepackage{wrapfig}
\usepackage{url}
\usepackage{hyperref}
\usepackage{comment}

\captionsetup{font=footnotesize}
\captionsetup[sub]{font=footnotesize}

\definecolor{colorA}{HTML}{B499BA}
\definecolor{colorB}{HTML}{440354}
\definecolor{colorC}{HTML}{FDE725}
\definecolor{colorD}{HTML}{22918C}
\definecolor{colorE}{HTML}{A6D3D1}
\definecolor{colorF}{HTML}{FFFFAD}

\newcommand{\name}{{\tt MORALS}}

\newenvironment{myitem}
{
    \begin{list}{$\circ$ }{}
        \setlength{\topsep}{0pt}
        \setlength{\parskip}{0pt}
        \setlength{\partopsep}{0pt}
        \setlength{\parsep}{0pt}         
        \setlength{\itemsep}{0pt} 
	\setlength{\leftskip}{-12pt}
}
{
    \end{list} 
}

\title{\LARGE \bf
\name: Analysis of High-Dimensional Robot Controllers\\ via Topological Tools in a Latent Space
}

\author{Ewerton R. Vieira$^{*,2,3}$, Aravind Sivaramakrishnan$^{*,1}$, Sumanth Tangirala$^{1}$, Edgar Granados$^1$, \\ Konstantin Mischaikow$^{2}$, and Kostas E. Bekris$^{1}$
\thanks{$^{*}$The first two authors contributed equally to this paper.
}
\thanks{$^{**}$Code: \url{https://github.com/Ewerton-Vieira/MORALS}}
\thanks{$^{1}$ Dept. of Computer Science, $^{2}$ Dept. of Mathematics and $^{3}$ DIMACS, Rutgers, NJ, USA.  {\tt \{er691,as2578, kb572\}@rutgers.edu}.}%
\thanks{This work is partly supported by NSF HDR TRIPODS award 1934924. EV and KM are partially supported by Air Force Office of Scientific Research under award numbers FA9550-23-1-0011 and FA9550-23-1-0400.}
}

\begin{document}

\maketitle
\thispagestyle{empty}
\pagestyle{empty}

\begin{abstract}

Estimating the region of attraction (\roa) for a robot controller is essential for safe application and controller composition. Many existing methods require a closed-form expression that limit applicability to data-driven controllers. Methods that operate only over trajectory rollouts tend to be data-hungry. In prior work, we have demonstrated that topological tools based on  {\it Morse Graphs} (directed acyclic graphs that combinatorially represent the underlying nonlinear dynamics) offer data-efficient \roa\ estimation without needing an analytical model. They struggle, however, with high-dimensional systems as they operate over a state-space discretization. This paper presents {\it Mo}rse Graph-aided discovery of {\it R}egions of {\it A}ttraction in a learned {\it L}atent {\it S}pace (\name)$^{**}$. The approach combines auto-encoding neural networks with Morse Graphs. \name\ shows promising predictive capabilities in estimating attractors and their \roa s for data-driven controllers operating over high-dimensional systems, including a 67-dim humanoid robot and a 96-dim 3-fingered manipulator. It first projects the dynamics of the controlled system into a learned latent space. Then, it constructs a reduced form of Morse Graphs representing the bistability of the underlying dynamics, i.e., detecting when the controller results in a desired versus an undesired behavior. The evaluation on high-dimensional robotic datasets indicates data efficiency in \roa\ estimation.
\end{abstract}


\section{Introduction}
\label{sec:introduction}

Given a controller for a robotic system, it is desirable to estimate its region of attraction (\roa), i.e., a subset of the system's state space, such that all trajectories starting inside this set converge to an equilibrium \cite{bansal2017hamilton}. \roa\ estimation helps understand the conditions under which the controller can be safely applied to solve a task. It can also be used for controller composition where the final \roa\ is larger than the \roa\ of each component controller \cite{tedrake2010lqr}. 

The authors introduced in prior work topological tools based on \textit{Morse graphs} \cite{morsegraph}. Morse Graphs provide a finite, combinatorial representation of the state space given access to a discrete-time representation of the dynamics. They correspond to directed acyclic graphs that provide a rigorous description of attractors and \roa s at different levels of resolution. They were introduced as data-efficient and more accurate alternatives to estimate the \roa s of general robot controllers, including data-driven ones. For systems with unknown dynamics, they can be combined with surrogate modeling to identify the \roa\ for a goal set and describe the global dynamic behavior of a controller \cite{gp_morsegraph}. 

\begin{figure}
    \centering
    \includegraphics[width=.99\columnwidth]{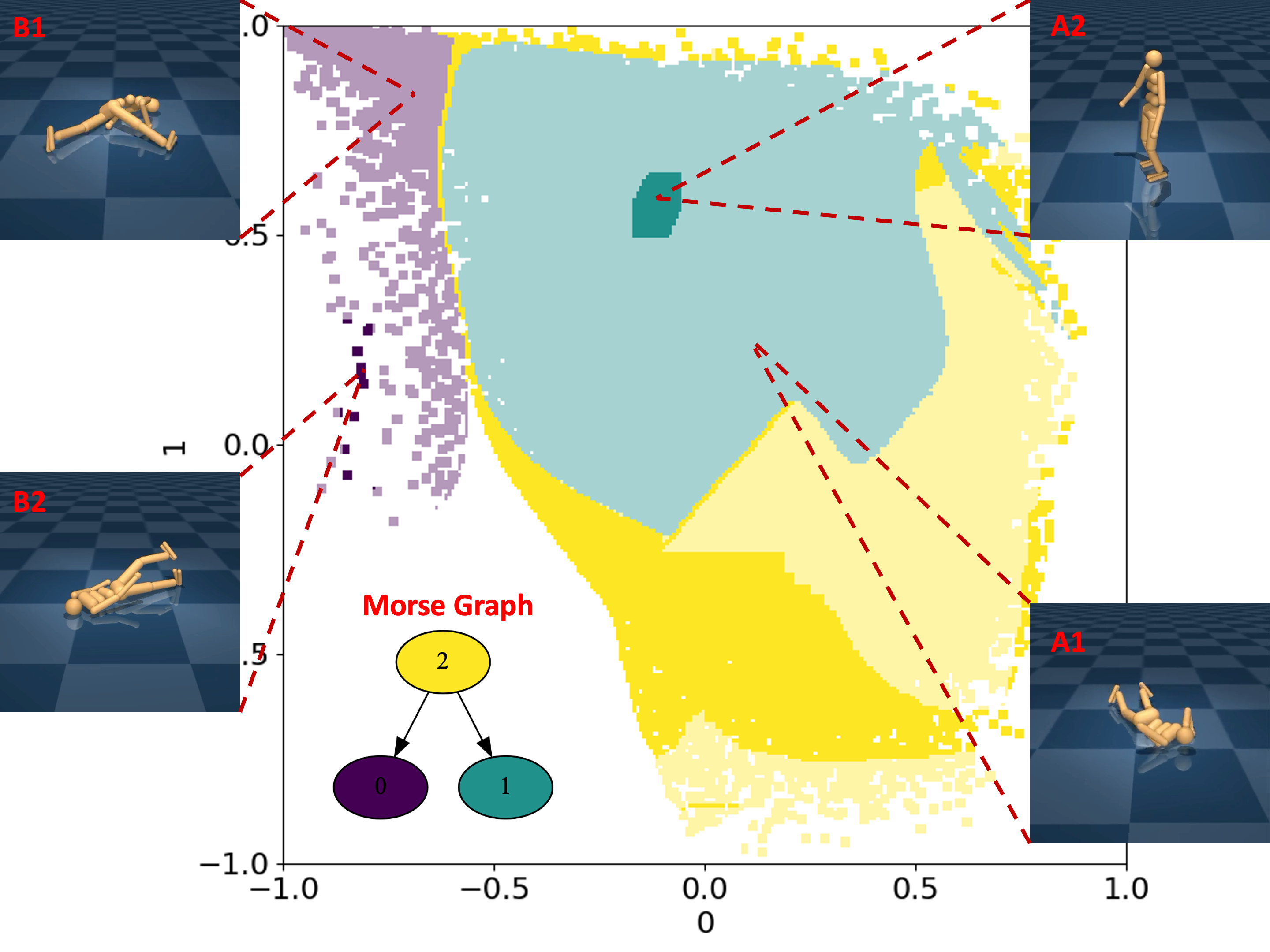}
    \vspace{-.25in}
    \caption{\small The 67-dim. state space of a bipedal humanoid robot controlled by a Soft Actor-Critic (SAC) controller is encoded to a 2-dim. learned latent space by \name, which then discovers two attractors and their corresponding Regions of Attraction (\roa s) in the latent space. Encoded final states {\bf A2}, {\bf B2} are mapped to a desired (dark green) and undesired attractor (dark purple) respectively. Encoded initial states {\bf A1}, {\bf B1} lie respectively in the \roa s (light green and light purple) of the desired and undesired attractors. The yellow region contains the separatrix (undecidable region), indicating initial states may go to {\bf A2} (node 2 $\rightarrow$ node 1) or {\bf B2} (node 2 $\rightarrow$ node 0). Best viewed in color.}
    \label{fig:overview}
\vspace{-0.3in}
\end{figure}

Morse Graphs rely only on point-wise access to short trajectories from each cell of a state space discretization. Their accuracy depends significantly on the size of the discretization of the system's state space. Thus, applying them directly to high-dim. robotic systems, such as bipedal robots (Fig~\ref{fig:overview}), is computationally expensive or even infeasible. In practice, however, the effect of a controller on a robotic system, even a complex and high-dim. one is to restrict the dynamics to a lower-dim. manifold. For example, simplified models of either stiff \cite{kajita20013d} or compliant \cite{geyer2006compliant} inverted pendula can result in controllers for a high-dim. bipedal robot. It may thus be possible to derive meaningful conclusions about the dynamics in the robot's original high-dim. state space by first identifying a lower-dim manifold given example trajectory data and studying the dynamics of that manifold. 

This work proposes \textbf{Mo}rse Graph-aided discovery of \textbf{R}egions of \textbf{A}ttraction in a learned \textbf{L}atent \textbf{Space} (\name), which uses an autoencoding neural network as a lower-dim. surrogate model of the underlying controlled dynamics trained on robot trajectories. The network encodes the system's high-dim. state space to a lower-dim \textit{latent} one so that the latent dynamics locally approximate the system's behavior in the original space. Then, a discretization of the low-dim. space is used to compute a combinatorial representation of the latent dynamics in the form of a Morse Graph, i.e., a directed acyclic graph that provides a rigorous description of the attractors and \roa s at different levels of resolution. Finally, the Morse Graph is reduced into a simpler graph representing the bistability in the dynamics, i.e., identifying desired versus undesired behaviors of the controller as shown in Fig~\ref{fig:overview}.  The experimental evaluation considers a combination of analytical and physically simulated benchmarks, including for a 67-dim. humanoid, as well as real-world robotic datasets - for a 96-dim 3-fingered manipulator. Due to both dimensionality and non-availability of the ground-truth model, it is infeasible to directly apply the prior Morse Graph framework. Nevertheless, \name\ achieves promising accuracy with significantly fewer data requirements in estimating the desired \roa. \emph{To the best of the authors' knowledge, no competing methods currently can perform this analysis for black-box controllers of such high-dim. systems given only trajectory data.}


\section{Related work} 

Estimating the \roa\ of a control system is a hard problem \cite{ahmadi2013complexity}. Multiple methods \cite{giesl2015review} rely on an analytical expression and use linear matrix inequalities \cite{pesterev2017attraction, pesterev2019estimation} or sum-of-squares solvers \cite{parrilo2000structured,tedrake2010lqr,majumdar2017funnel}. Traditional Lyapunov-based methods are applicable \cite{vannelli1985maximal,parrilo2000structured} but require an analytical model. Even data-driven variants tend to require access to point-wise evaluation of the dynamical model \cite{bobiti2016sampling}. Conservative approximation methods try to estimate the largest possible set within the true \roa\ \cite{chen2021learning_hybrid,chen2021learning_nonlinear,mamakoukas2020learning,richards2018lyapunov}. Gaussian Processes can provide Lyapunov-like functions \cite{lederer2019local}. These methods typically suffer from high data requirements and estimation of a single attractor. 

Reachability analysis \cite{bansal2017hamilton} and control barrier functions \cite{ames2019control} are popular alternatives. Reachability analysis can approximate the \roa\ of dynamical walkers \cite{choi2022computation} and together with learning can maintain system safety over a given horizon \cite{gillulay2011guaranteed}. Using Gaussian Processes, a barrier function can be learned to obtain safe policies \cite{akametalu2014reachability}, or identify areas needing exploration for safe set expansion~\cite{wang2018safe}. Controllers can also be jointly trained alongside neural Lyapunov functions \cite{dawson2023safe}. Prior knowledge of an attractor is a common requirement for these approaches. In contrast, the proposed approach can discover multiple if they exist. Alternative methods also lack explainability, while \name\ provides a graphical description of the dynamics. 


Unsupervised representation learning helps extract a latent representation of a robot's state space and enforces the dynamics in this learned space.  Such efforts \cite{srinivas2018universal,banijamali2018robust,watter2015embed} tend to focus on locally valid dynamics and do not study the global dynamics. Latent Sampling-based Motion Planning (L-SBMP) \cite{ichter2019robot} enforces the latent dynamics via a reachability Grammian. Similarly, Learning To Correspond Dynamical Systems (L2CDS) \cite{pmlr-v120-kim20a} learns correspondences between pairs of dynamical systems through a shared latent dynamical system. \name\ uses an autoencoding architecture similar to L-SBMP and L2CDS. Unlike L-SBMP, it does not learn the latent dynamics for the system but focuses on the controller studied. Unlike L2CDS, \name\ does not require assumptions about the latent dynamical model and learns the latent manifold directly from data.

\begin{figure*}[!htbp]
\centering
\scalebox{0.780}{
\begin{overpic}[width=0.4\linewidth]{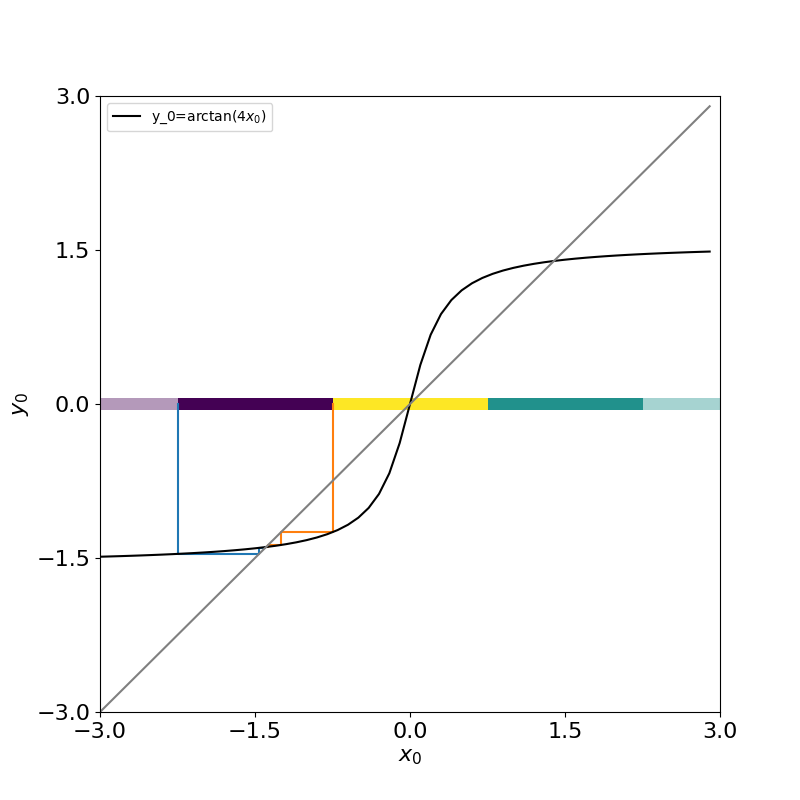}
   \put(15,51){$a$}
   \put(31,51){$b$}
   \put(46,51){$c$}
   \put(70,51){$d$}
   \put(83,51){$e$}
   \put(14,65){
   \scalebox{0.5}{
   \begin{tikzpicture}
   \begin{scope}[every node/.style={circle,thick,draw}]
        \node[fill=colorB] (B) at (1,0) {\textcolor{white}{B}};
        \node[fill=colorC] (C) at (2,1) {C};
        \node[fill=colorD] (D) at (3,0) {\textcolor{white}{D}};
    \end{scope}
    \begin{scope}[every edge/.style={draw=black, very thick}]
        \path [->] (C) edge (B);
        \path [->] (C) edge (D);
    \end{scope}
    \node at (0.7,1) {$\sMG(\cF)$};
    \draw[rounded corners] (0, -0.5) rectangle (3.5, 1.5);
    \end{tikzpicture}
    }
   }
   \put(50,15){
   \scalebox{0.5}{
   \begin{tikzpicture}
    \begin{scope}[every node/.style={circle,thick,draw}]
        \node[fill=colorA] (A) at (0,1) {A};
        \node[fill=colorB] (B) at (1,0) {\textcolor{white}{B}};
        \node[fill=colorC] (C) at (2,1) {C};
        \node[fill=colorD] (D) at (3,0) {\textcolor{white}{D}};
        \node[fill=colorE] (E) at (4,1) {E};
        \node[fill=colorF] (F) at (3,1.2) {F};
    \end{scope}
    \begin{scope}[every edge/.style={draw=black, very thick}]
        \path [->] (A) edge (B);
        \path [->] (C) edge (B);
        \path [->] (C) edge (D);
        \path [->] (E) edge (D);
        \path [->] (F) edge (C);
        \path [->] (C) edge [loop above] (C);
        \path [->] (B) edge [loop below] (B);
        \path [->] (D) edge [loop below] (D);
    \end{scope}
    \draw[rounded corners] (-0.5, -1) rectangle (4.5, 2);
    \node at (0.5,1.8) {$\mathcal{F}$};
    \draw[draw=none, fill=blue!30,rounded corners, opacity=0.2] (-0.3, -0.5) rectangle (4.3, 1.5);
    \node at (0.5,1.8) {$\mathcal{F}$};
    \node at (4,0) {\textcolor{blue}{$\sCG(\cF)$}};
    \end{tikzpicture}
    }
   }
\end{overpic}
\begin{overpic}[width=0.4\linewidth]{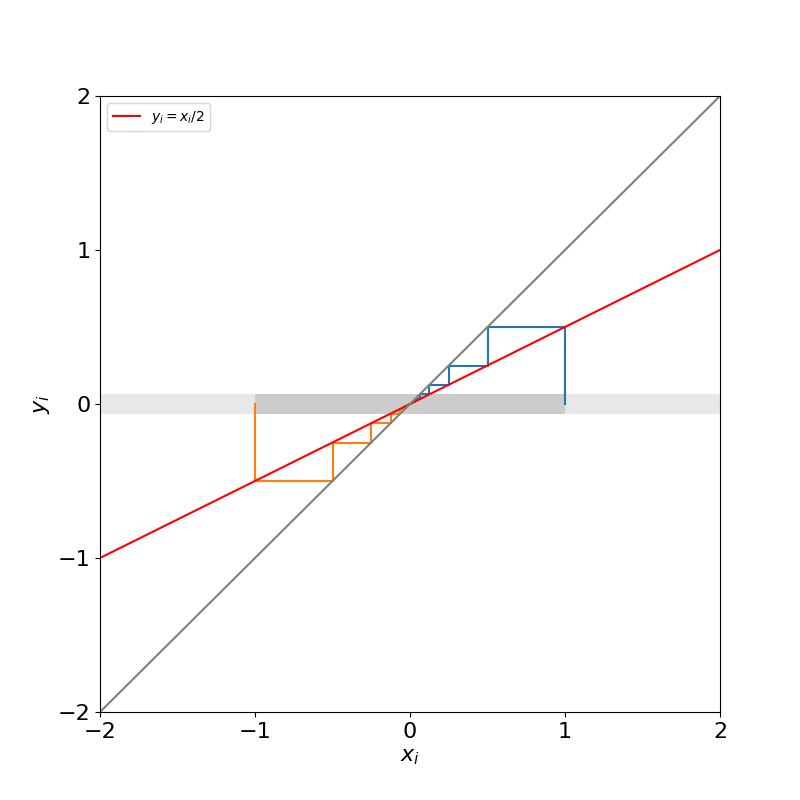}
    \put(-6,47){\scalebox{2.5}{$\times$}}
   \put(20,51){$f_i$}
   \put(42,51){$g_i$}
   \put(80,51){$h_i$}
   \put(-6,83){\scalebox{2.5}{$X$}}
    \put(-109,-0.5){
   \begin{tikzpicture}
      \draw[rounded corners, line width=0.25mm] (0, 0) rectangle (14.8, 6.8);
    \end{tikzpicture}
    }
\end{overpic}
}\quad
\begin{overpic}[width=0.30\linewidth]{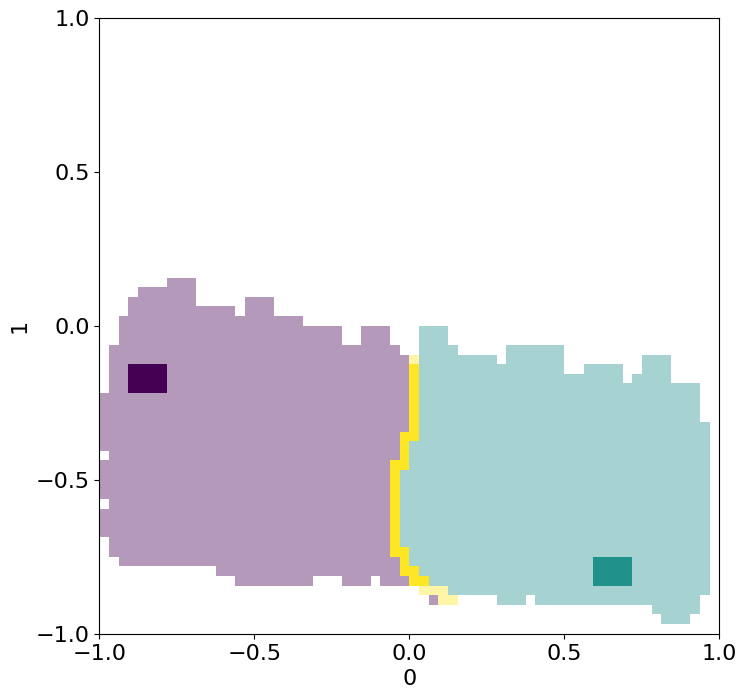}
\put(60,70){
 \scalebox{0.5}{
   \begin{tikzpicture}
   \begin{scope}[every node/.style={circle,thick,draw}]
        \node[fill=colorB] (B) at (1,0) {\textcolor{white}{0}};
        \node[fill=colorC] (C) at (2,1) {2};
        \node[fill=colorD] (D) at (3,0) {\textcolor{white}{1}};
    \end{scope}
    \begin{scope}[every edge/.style={draw=black, very thick}]
        \path [->] (C) edge (B);
        \path [->] (C) edge (D);
    \end{scope}
    \node at (0.7,1) {$\sR_{\sMG}$};
    \draw[rounded corners] (0, -0.5) rectangle (3.5, 1.5);
    \end{tikzpicture}
    }
}
\end{overpic}
\caption{\small Example of $N$-dim. bistability (left and middle) and the learned dynamics on the 2-dim. encoded space (right). (left \& middle) The state space is $X=X_1 \times \prod_{i=2}^N X_i$ where $X_i=[-3,3]\times[-2,2]^{N-1} $ and the dynamics $\phi : X \to X$ is given by $\phi_1(x) = \arctan(4x)$ plotted in black and $\phi_i(x) = x/2$ plotted in red, where $i=2, \ldots, N$. The domains $X_1 = [-3,3]$ and $X_i=[-2,2]$ are decomposed into intervals $a$ through $e$ and $f_i, g_i, h_i$, respectively. Forward propagation of $B=b\times \prod g_i$ is depicted by the lines from
the boundary of $b$ and $g_i$'s. $\cF$ is a directed graph capturing reachable vertices from other vertices, (regions in $\{a,b,c,d,e\}\times\prod_{i=2}^N\{f_i,g_i,h_i\}$). Strongly connected components of $\cF$ result in $\sCG(\cF)$. Finally, the Morse Graph $\sMG(\cF)$ (nodes {B, C, D}) contains the attractors and expresses their \roa s.  (right) The $N$-dim. bistability dynamics are encoded into a 2-dim. latent space represented by a bistable Morse Graph $\sR_{\sMG}$.}
\vspace{-.2in}
\label{fig:bistable}
\end{figure*} 

\section{Preliminaries}
\label{sec:prelims}

This work aims to provide a data-efficient framework for the analysis of the global dynamics of robot controllers based on combinatorial dynamics and order theory \cite{morsegraph, kalies:mischaikow:vandervorst:14,kalies:mischaikow:vandervorst:15,kalies:mischaikow:vandervorst:21}. Consider a non-linear, continuous-time control system: \vspace{-0.15in}

\begin{equation} \label{eq:dynamics}
    \dot{x} = f(x,u)
    \vspace{-0.05in}
\end{equation}

\noindent where $x(t) \in X \subseteq \mathbb{R}^N$ is the state at time $t$, $X$ is a compact set, $u: X \mapsto U \subseteq \mathbb{R}^M$ is a Lipschitz-continuous control as defined by a deterministic control policy $u(x)$, and $f: X \times \mathbb{U} \mapsto \mathbb{R}^M$ is a Lipschitz-continuous function. Neither $f(\cdot)$ nor $u(x)$ are necessarily known analytically. For instance, 
$u(x)$ can be a function learned via a neural network. 

A trajectory (or an \textit{orbit}) of length $\tau > 0$ is defined as a sequence of states obtained by integrating  Eq.~\ref{eq:dynamics} forward in time. Let the \textit{image} $\phi_\tau: X \rightarrow X$ denote the function obtained by mapping every initial state $x_0 \in X$ to the end state of a trajectory of length $\tau$ beginning at $x_0.$ A set $A$ is an attractor if there exists a neighborhood $N$ of $A$ such that \vspace{-.1in}
\[
\mathcal{A}=\omega(N):= \bigcap_{n \in \mathbb{Z}^+} \mathrm{cl}\left(\bigcup_{k=n}^\infty \phi_\tau^k(N)\right)
\vspace{-.1in} \]
where $\phi_\tau^k$ is the composition $\phi_\tau\circ \cdots \circ\phi_\tau$ ($k$ times) and $\mathrm{cl}$ is topological closure. For instance, attractors can be fixed points, such as a desired goal 
that the control policy manages to bring the system to; or limit
cycles, such as a periodic behavior of the system. A {\it Region of Attraction} (\roa) of an attractor $A$ is a neighborhood of $A$ and a subset of $\cB$ the {\it basin of attraction} $A$, where $\cB$ is the largest set of points whose forward orbits converge to $A$, more specifically, the maximal set $\mathcal{B}$ that satisfies $A=\omega(\cB)$. Given that $f$ and $u$ are Lipschitz-continuous, $\phi_\tau$ is too; also any \roa \  of Eq. \eqref{eq:dynamics} is  an \roa \ under $\phi_\tau$. Thus, it is sufficient to analyze the dynamics according to $\phi_\tau$ to study Eq. \eqref{eq:dynamics}, even if it is not accessible and computable.

As a pedagogical example, consider the $N$-dim. \textit{bistable} system in Fig.~\ref{fig:bistable}: Given a state $x = [x_1, \cdots , x_{N}] \in X = X_1 \times \prod_{i=2}^N X_i=[-3,3]\times[-2,2]^{N-1}$, its image is given by $\phi_\tau(x) = \arctan4x_1\times \prod_{i=2}^N x_i/2$. Obtaining Morse graphs and \roa s involves a four-step procedure:

\emph{1. State space decomposition and outer approximation of $\phi_\tau$.} The function $\phi_\tau$ is approximated by decomposing $X$ into a collection of regions $\mathcal{X}$. For instance, via a grid. Fig. \ref{fig:bistable} (left) shows the intervals $[-3,3]$ and $[-2,2]$ decomposed into sub-intervals $a$ to $e$ and $f_i$ to $h_i$ respectively. A grid will be the Cartesian product $\{a,b,c,d,e\}\times\prod_{i=2}^N\{f_i,g_i,h_i\}$. Given a region $\xi \in \mathcal{X}$ (i.e., a cell in the grid), the system is forward propagated for multiple initial states within $\xi$ for time $\tau$ to identify regions reachable from $\xi$. Consider the sub-interval $b\times \prod g_i$. The lines from the boundary of $b\times \prod g_i$ depict the forward propagation of the dynamics. This cell maps to itself given the underlying dynamics.

\emph{2. Constructing combinatorial representation $\mathcal{F}$ of the dynamics.}  The directed graph representation denoted as $\mathcal{F}$ stores regions $\xi \in \mathcal{X}$  as  vertices. Edges from $\xi$ point to regions reachable from $\xi$.  In Fig. \ref{fig:bistable}, the graph $\mathcal{F}$ contains the following nodes/cells: $B=\{b\}\times\prod_{i=2}^N\{g_i\}$, $C=\{c\}\times\prod_{i=2}^N\{g_i\}$, $D=\{d\}\times\prod_{i=2}^N\{g_i\}$, $A=\{a,b\}\times\prod_{i=2}^N\{f_i,g_i,h_i\}\backslash B$, 
$E=\{d,e\}\times\prod_{i=2}^N\{f_i,g_i,h_i\}\backslash D$ and $F=\{c\}\times\prod_{i=2}^N\{f_i,g_i,h_i\}\backslash C$. The edges $(C, C)$ and $(C, B)$ express that $C$ maps both to itself and to $B$. 


\emph{3. Compute the \textit{Condensation Graph} $\mathsf{CG}(\mathcal{F})$}. Collapsing all nodes that are part of strongly connected components (SCCs) in $\mathcal{F}$ into a single node gives rise to the condensation graph $\mathrm{CG}\mathcal{(F)}$. Edges on $\mathrm{CG}\mathcal{(F)}$ reflect reachability due to the dynamics given a topological sort on $\mathcal{F}$. In Fig. \ref{fig:bistable}(left), $\mathrm{CG}\mathcal{(F)}$ is the subgraph with nodes $A$ to $F$ and all non-self edges, i.e., $\mathrm{CG}\mathcal{(F)}$ has no cycles. Since $\mathrm{CG}\mathcal{(F)}$ is a directed acyclic graph, it is a {\it partially ordered set}, i.e., a {\it poset}. 
 
{\emph 4. Compute the \textit{Morse Graph} $\mathsf{MG}$ to detect attractors and \roa s.}  A \emph{recurrent set} is a SCC of $\mathsf{CG}(\mathcal{F})$ that contains at least one edge. Then, the \emph{Morse graph} $\mathrm{MG}(\mathcal{F})$ of $\mathcal{F}$ is the subgraph of recurrent sets of $\mathrm{CG}\mathcal{(F)}$. In Fig. \ref{fig:bistable}(left), $\mathrm{MG}(\mathcal{F})$ contains nodes $B$, $C$ and $D$ and the edges between them.  $\mathrm{MG}(\mathcal{F})$ captures both recurrent and non-recurrent dynamics. Recurrent sets are vertices, and the minimal vertices contain attractors of interest. Edges signify reachability between these sets. In Fig. \ref{fig:bistable}, there are 2 attractors: B and D. Cells in A are in the \roa\ of B, and cells in E are in the \roa\ of D. From cells in F and C, the system can end up in either B or D, characterizing a {\it bistability}.

As a summary of prior results with the Morse Graph approach, consider a pendulum governed by  $ml^2\ddot{\theta} = mGl\sin\theta - \beta\dot{\theta} + u$, given mass $m$, gravity $G$, pole length $l$ and friction coefficient $\beta$. It is controlled by a Linear Quadratic Regulator (LQR) to stand upright. Table~\ref{tab:comparisons} reports the \roa\ estimate accuracy (as a ratio over the volume of the true \roa) for the above approach ({\tt MorseGraph}) and alternatives in the literature. Data efficiency is measured using the total propagation steps required for the \roa\ estimate. Both {\tt L-LQR} and {\tt L-SOS} are analytical methods that use a linearized unconstrained form of the dynamics \cite{prajna2002introducing} to obtain a Lyapunov function (LF).  Lyapunov Neural Network ({\tt L-NN}) \cite{richards2018lyapunov} is a machine learning tool for estimating \roa s.

\begin{table}[h!]
\vspace{-0.1in}
    \centering
    \begin{tabular}{|c|c|c|c|c|}
    \hline
        \textbf{Metric} & {\tt L-LQR} & {\tt L-SOS} & {\tt L-NN} & {\tt MorseGraph}  \\ \hline
         Accuracy & 70\% & 3\% & 98\% & 97\% \\ \hline 
         Prop. steps & $-$ & $-$  & 667.1M & 120K \\ \hline 
    \end{tabular}
\vspace{-0.1cm}
    \caption{\small Accuracy and data efficiency of desired \roa\ estimate for Pend (LQR) using different methods. {\tt L-LQR} and {\tt L-SOS} \cite{morsegraph} require the analytical form of the dynamics. }
    \label{tab:comparisons}
\vspace{-0.15in}
\end{table}

The above methods either require access to an analytical model ({\tt L-LQR}, {\tt L-SOS}) or dynamics propagation from a dense set of initial states ({\tt L-NN}). This may not be possible for complex and high-dim. robots or data-driven controllers that do not admit a closed-form expression.  As the dimension of the underlying system increases, it becomes challenging to apply the {\tt MorseGraph} approach, even though it requires $6000\times$ fewer data points than {\tt L-NN} for a comparable \roa \ estimate. This is due to the exponential increase in the number of discretization elements. The proposed framework deals with this issue via unsupervised representation learning.
\begin{figure*}[h!]
    \centering
    \begin{subfigure}{.33\textwidth}
    \includegraphics[width=\textwidth]{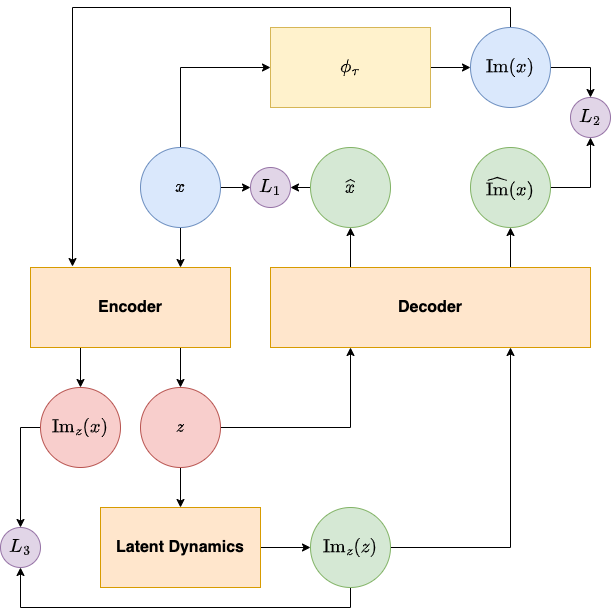}
    \end{subfigure}
    \begin{subfigure}{.66\textwidth}
    \includegraphics[width=\textwidth]{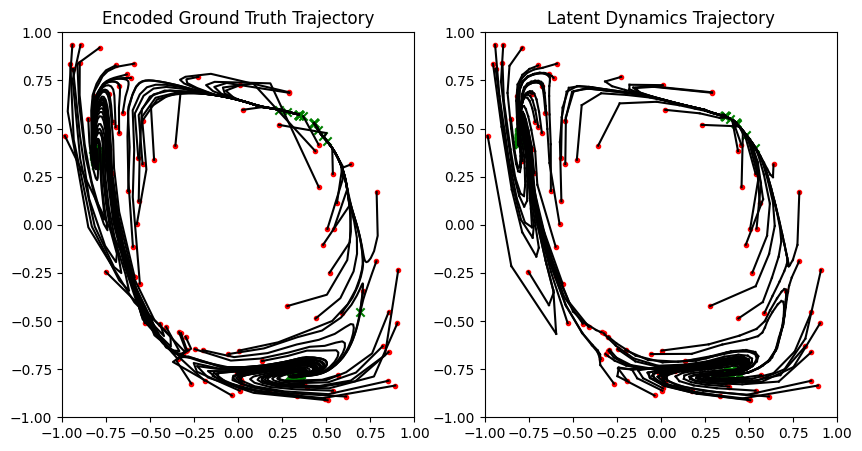}
    \end{subfigure}
    \vspace{-.2in}
    \caption{\small (Left) The autoencoding neural network and loss functions used for training the encoder $h_\text{enc}$, decoder $h_\text{dec},$ and the latent dynamics $h_\text{dyn}$. (Middle/Right) Visualizing the learned latent dynamics for a 4-dim. version of a pendulum $(x,\dot{x},y,\dot{y})$ controlled by LQR. (Middle) the ground-truth trajectory for the same initial conditions (circles). (Right) iteratively calling $h_\text{dyn}$ for a fixed number of timesteps. Both plots capture the 3 true attractors ($\times$). The right plot does not contain regions where the trajectories move from one \roa \ to another.}
    \label{fig:main-figure}
    \vspace{-.25in}
\end{figure*}

\section{Proposed Method}
\label{sec:proposed}


Given a high $N$-dim. state $x$, an \textit{encoder} $h_\text{enc}: X \mapsto Z \subseteq \mathbb{R}^D (D < N)$ encodes $x$ to a lower $D$-dim. latent state $z = h_\text{enc}(x)$. A \textit{decoder} performs the inverse mapping $x = h_\text{dec}(z)$. A \textit{latent dynamics} function, $h_\text{dyn}: Z \mapsto Z$, expresses the dynamics of the latent space.  The proposed approach, \name, incorporates an autoencoding network consisting of $h_\text{enc}$, $h_\text{dec}$, and $h_\text{dyn}$, that can be trained from trajectory rollouts of the underlying system. The architecture used is shown in Fig. \ref{fig:main-figure} (left). The trained networks are used to build a combinatorial representation of the dynamics in the learned latent space $Z$, which can be used to understand the \roa s of the system in the original state space $X$. The method is divided into the following steps:

\begin{myitem}
    \item[1.] Collect data from the system.
    \item[2.] Train the neural networks $h_\text{enc}, h_\text{dec}$ and $h_\text{dyn}$.
    \item[3.] Compute the Morse Graph {\tt MG} of the lower dim. system with state space $Z$, whose initial condition is $z_0 = h_\text{enc}(x_0)$ and $\phi_\tau^Z = h_\text{dyn} \circ \cdots \circ h_\text{dyn}$ ($\tau$ times).
    \item[4.] Given a new state $x \in X$, determine whether it is in the desired \roa \ using {\tt MG}.
\end{myitem}

{\bf 1. Data Collection} The method collects time series data in the form of long trajectories from various initial states in $X$. The data is partitioned into ordered pairs of the form $(x, \text{Im}(x))$, where Im($x$) = $\phi_\tau(x)$. Denote by $\mathcal{D} = \{ (x^i, \text{Im}^i(x)) \}_{i=1}^n$ the robot trajectory data collected. 

Let $\mathbf{F}$ be the set of all final states of every trajectory in the dataset. If the underlying system has the notion of a ``desired'' attractor (such as the upright position for a bipedal humanoid), each final state $x_\mathcal{T} \in \mathbf{F}$ is assigned a label $y(x_\mathcal{T})$ of $+1$ if it successfully achieves the desired task or $0$ otherwise. This helps determine which attractors discovered by \name \ in the learned latent manifold are desirable.

\label{sec:network-arch}
\noindent {\bf 2. Network Architecture and Training} Given a pair $(x, \text{Im}(x))$, the encoder network $h_\text{enc}$ maps it onto the latent space to obtain $(z, \text{Im}_z(x)).$ The decoder network $h_\text{dec}$ acts on $z$ to obtain $\hat{x}$, which is the \textit{reconstruction} of input state $x$. The latent dynamics network acts on $z$ to produce the \textit{image} $\text{Im}_z(z)$ in the latent state. The decoder obtains $\hat{\text{Im}}(x)$, the reconstruction of the input Im($x$) by acting on $\text{Im}_z(z)$.

Training attempts to minimize the following losses.
\vspace{-.15in}

\begin{equation} \label{eqn: l1}
    L_1 = \mathbb{E}_{x \sim \mathcal{D}} \vert \vert x - h_\text{dec}(h_\text{enc}(x)) \vert \vert_2^2
\end{equation}
\vspace{-.2in}
\begin{equation} \label{eqn: l2}
    L_2 = \mathbb{E}_{\text{Im}(x) \sim \mathcal{D}} \vert \vert \text{Im}(x) - h_\text{dec}(h_\text{enc}(\text{Im}(x)))
 \vert \vert_2^2
\end{equation}
 \vspace{-.2in}
\begin{equation} \label{eqn: l3}    
    L_3 = \mathbb{E}_{(x, \text{Im}(x)) \sim \mathcal{D}} \vert \vert h_\text{dyn}(h_\text{enc}(x)) - h_\text{enc}(\text{Im}(x)) \vert \vert_2^2
 \vspace{-.05in}
\end{equation}

$L_1$ and $L_2$ losses enforce that the reconstructed inputs closely match the original ones  by minimizing Euclidean distance. $L_3$ enforces the local dynamics of the latent space by minimizing the Euclidean distance between the latent state image and the encoded image.

All three neural networks are jointly trained to minimize $L = \lambda_1 L_1 + \lambda_2 L_2 + \lambda_3 L_3$, where $\lambda_i, i \in  [1,2,3]$ are weights for each minimization objective. To ensure that the obtained latent manifold is bounded, the output layers of $h_\text{enc}$ and $h_\text{dyn}$ are activated using the $\tanh$ activation. Thus, the obtained latent manifold always lies inside $[-1,1]^D$.

\emph{Optional loss term using labeled data.} Split the final states of demonstrated trajectories into  desirable and undesirable sets: $\mathbf{F}_\text{s} = \{x \in \mathbf{F} \vert y(x) = +1\}$ and $\mathbf{F}_\text{f} = \{x \in \mathbf{F} \vert y(x) = 0\}$. Then, an $L_4$ loss can separates the encoded final states in $\mathbf{F}_s$ from their counterparts in $\mathbf{F}_f$: \vspace{-0.5cm}

\begin{equation} \label{eqn: l4}
    L_4 = \mathbb{E}_{x_s \sim \mathbf{F}_s, x_f \sim \mathbf{F}_f} [\sigma(-c\vert \vert h_\text{enc}(x_s) - h_\text{enc}(x_f) \vert \vert)].
\end{equation}

To ensure that the encoded final states do not lie on the latent space boundaries, the sigmoid function $\sigma(a) = \frac{1}{1 - e^{-a}}$ is used to upper-bound the distance between the encodings of the desirable and undesirable final states. The constant $c$ is a scaling factor. The training procedure iterates between minimizing the loss functions $\lambda_1L_1 + \lambda_2L_2 + \lambda_3L_3$ and $\lambda_4L_4$.

\noindent {\bf 3. Morse Graph computation in the latent space} The uniform discretization $\mathcal{Z}$ of the latent space $[-1,1]^D$ into $\prod_{i=1}^D 2^{k_i}$ cubes of dimension $D$ needs to be validated since some cubes in $\mathcal{Z}$ may not correspond to a valid state in $X$ when decoded. To ensure a well-defined discretization of $Z$, $\mathcal{X}$ is validated as follows: A set of valid random points $P_x = \{x_i\}$ in $X$ is encoded to obtain $P_z = \{z_i=h_\text{enc}(x_i)\}$. All boxes in $\cZ$ that contain at least one point in $P_z$ or are immediate neighbors to a box containing a point in $P_z$ are validated. The set $P_x$ can also use states in the demonstrated trajectory data $\cD$.


The \emph{input representation} of the learned dynamics network $\phi^Z_\tau$ is generated by $V(\mathcal{Z})$, the set of all corner points of cubes in $\mathcal{Z}$. The method computes the set of ordered pairs $\Phi_\tau(\mathcal{Z}):=\{(v,\phi_\tau(v)) \mid v \in V(\mathcal{Z})\}$, by calling $h_\text{dyn}(v)$ for time $\tau$ from all $v \in V(\mathcal{Z})$. 
Then, $\Phi_\tau(\mathcal{Z})$ is used to generate the {\it combinatorial representation of the dynamics} of $\phi^Z_\tau$ by approximating it as a {\it combinatorial multivalued map} $\cF \colon \mathcal{Z}\rightrightarrows \mathcal{Z}$, where vertices are $n$-cubes $\xi \in \mathcal{Z}$. The map $\cF$ contains directed edges $\xi \to \xi', \forall\ \xi' \in \Phi_\tau(\xi)$ and the cubes obtained by $\cF(\xi)$ are intended to capture the image of $\phi_\tau^Z(\xi)$. $\cF$ is computed as follows: Given $z\in Z$,  let $\overline{B(z,\delta)}  = \{z'\in Z\mid \|z-z'\| \leq \delta \}$ denote the $\delta$-closed ball at state $z$.
Define the \emph{diameter of $\xi \in \cZ$} by $d(\xi) := \max_{z,z'\in \xi}\|z-z'\|$ and the \emph{diameter of $\cZ$} by $d := \max_{\xi \in \cZ} d (\xi)$. Note that for a uniform grid, $d = d(\xi)$, independently of the choice of $\xi$. Let $V(\xi)$ be the set of corner points of the cube $\xi$ and\\ 

\vspace{-0.125in} 
\noindent $\cF(\xi) :=$
\vspace{-.35in}
\begin{multline}\label{eq:outer_approx}
   \ \ \ \ \ \ \ \left\{ \xi' \mid  \xi' \cap \overline{B\left(\phi^Z_\tau(v),Ld/2 \right)} \neq \emptyset\ \text{for $v\in V(\xi)$}  \right\} \hspace{-.1in} \vspace{-.35in}
\end{multline}

\noindent where $L$ is selected to be an upper bound for the Lipschitz constant $L_\tau$ of $\phi^Z_\tau$. Hence, the above definition of $\cF$ satisfies:

\vspace{-.25in}
\begin{equation}
    \label{eq:OuterApproximation}
    \cF_{min}(\xi) := \{ \xi' \in \mathcal{X}  \mid  \xi'\cap \phi_\tau(\xi) \neq \emptyset \} \subset \cF(\xi)
    \vspace{-.1in}
\end{equation}

\noindent an {\it outer approximation} of $\phi_\tau^Z$. The versatility in defining an outer approximation provides flexibility in incorporating safety constraints by adjusting the parameter $L$, which bounds the Lipschitz constant $L_\tau$.

\begin{wrapfigure}{l}{0.27\columnwidth}
    \centering
    \vspace{-0.85in}
    \begin{tikzpicture}
    \begin{scope}
        \node (a) at (-0.2,2) {a)};
        \node (b) at (1.1,2.4) {b)};
    \end{scope}
    \scalebox{0.6}{
    \begin{scope}[every node/.style={circle,thick,draw, minimum size=20pt}]
        \node[fill=violet!30] (0) at (0.5,0) {0};
        \node[fill=violet!30] (1) at (0,1) {1};
        \node[fill=violet!30] (2) at (0,2) {2};
        \node[fill=violet!30] (3) at (1.5,0) {3};
        \node[fill=violet!30] (4) at (1.5,1) {4};
        \node[fill=blue!30] (5) at (2.5,0) {5};
        \node[fill=yellow!50] (6) at (2.5,1) {6};
        \node[fill=violet!30] (7) at (2.2,1.8) {7};
        \node[fill=violet!30] (8) at (0.7,1.5) {8};
        \node[fill=yellow!50] (9) at (0.7,3) {9};
        \node[fill=yellow!50] (10) at (0,4) {10};
        \node[fill=yellow!50] (11) at (1,4) {11};
        \path [->, draw=black, very thick] (11) edge (9);
        \path [->, draw=black, very thick] (10) edge (9);
        \path [->, draw=black, very thick] (9) edge (2);
        \path [->, draw=black, very thick] (9) edge (8);
        \path [->, draw=black, very thick] (9) edge (4);
        \path [->, draw=black, very thick] (9) edge (7);
        \path [->, draw=black, very thick] (9) edge[out=340, in=65] (6);
        \path [->, draw=black, very thick] (2) edge (1);
        \path [->, draw=black, very thick] (8) edge (0);
        \path [->, draw=black, very thick] (7) edge[out=240, in=50] (3);
        \path [->, draw=black, very thick] (1) edge (0);
        \path [->, draw=black, very thick] (4) edge (3);
        \path [->, draw=black, very thick] (6) edge (5);
    \end{scope}
    }
    \scalebox{0.6}{
       \begin{scope}[every node/.style={circle,thick,draw}]
            \node[fill=violet!30] (B) at (3,3) {$U$};
            \node[fill=yellow!50] (C) at (2.5,4) {$R$};
            \node[fill=blue!30] (D) at (2,3) {$G$};
            \path [->, draw=black, very thick] (C) edge (B);
            \path [->, draw=black, very thick] (C) edge (D);
        \end{scope}
        }
    \end{tikzpicture}
    \vspace{-.18in}
    \caption{\small a) A Morse graph $\sMG(\cF)$ for the humanoid of Fig. \ref{fig:overview}; b) bistable Morse graph $\sR_{\sMG}$ from Th.~\ref{thm:order_retraction}.}
    \vspace{-.2in}
    \label{fig:order_retraction}
\end{wrapfigure}
In general, no unique Morse graph can be assigned to a dynamical system; instead, the Morse graph provides a rigorous language for describing the structure of the dynamics at different levels of resolution.
The proposed method aims to identify whether a given initial condition will lead to success or failure for a provided controller.  This can be codified via a Morse graph $\sR_{\sMG}=(\{G, U, R\}, <_r)$ with the partially ordered set (poset) structure $G<_r R$, $U<_r R$, where $G$ denotes success, $U$ denotes failure, and $R$ denotes initial conditions which may lead to success or failure.   This Morse graph has two minimal states, $G$ and $U$.

The number of nodes in the Morse graph $\sMG(\cF)$ computed using Alg. 1 from \cite{morsegraph} with inputs $(\cZ, \Phi_\tau(\cZ), L)$ is typically more than 3 nodes. In particular, it has more than 2 leaf (minimal) nodes, thus providing a richer description of dynamics than required.
Nevertheless, as codified in the following theorem, the information from the poset $(\sMG(\cF),<)$ can be used to produce the desired Morse graph $\sR_{\sMG}$.

\begin{thm}\label{thm:order_retraction}
Define $G$ as the set of minimal nodes of the Morse graph $\sMG(\cF)$ that correspond to successful final states of the system.
Define $R:=\{b\in\sMG \mid g<b \text{ for some $g\in G$}\}$ and set $U = \sMG(\cF)\setminus (G\cup R)$. Then $\sR_{\sMG}=(\{G, U, R\}, <_r)$ with the poset structure $G<_r R$, $U<_r R$ is a Morse graph for the system.
\end{thm}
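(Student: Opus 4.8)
\emph{Proof strategy.} The plan is to obtain $\sR_{\sMG}$ as a \emph{coarsening} of the Morse decomposition recorded by $\sMG(\cF)$, using the standard fact from combinatorial Conley theory (see \cite{kalies:mischaikow:vandervorst:14,morsegraph} and the references in Section~\ref{sec:prelims}) that if $\{M(p)\}_{p\in P}$ is a Morse decomposition of $\phi_\tau$ indexed by a finite poset $(P,<)$ and $P$ is partitioned into order-convex blocks whose induced quotient relation is antisymmetric, then regrouping the corresponding Morse sets (together with the orbits connecting their pieces) yields a Morse decomposition indexed by the quotient poset. So the argument reduces to two order-theoretic checks on the poset $P$ underlying $\sMG(\cF)$: (a) $\{G,R,U\}$ is a partition of $P$ into order-convex blocks, and (b) the quotient order on $\{G,R,U\}$ is contained in the one claimed. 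Throughout I take the convention that $g<b$ in $\sMG(\cF)$ means there is a directed path $b\to\cdots\to g$, that minimal nodes carry attractors, and that $G\neq\emptyset$ (the system has a desired attractor).

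For (a), note $U=P\setminus(G\cup R)$ by definition, and $G\cap R=\emptyset$ since any $r\in R$ has some $g\in G$ with $g<r$ and so is not minimal. I would then observe that $G$ is an antichain of minimal nodes, hence order-convex; that $R$ is an \emph{up-set} (if $r\in R$ and $r<s$, pick $g\in G$ with $g<r$, so $g<s$ and $s\in R$); hence $G\cup R$ is an up-set as well, so its complement $U$ is a down-set and in particular order-convex. For (b), I would enumerate the cross-block strict relations of $P$: $r<g$ is impossible by minimality of $g$; $r<u$ with $r\in R$ is impossible because $R$ is an up-set; no relation can hold between $G$ and $U$ in either direction (it would contradict minimality of $G$ or force a $U$-node into $R$); the relations $g<r$ do occur by definition of $R$, and $u<r$ may occur. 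Thus the induced quotient relation is contained in $\{G\prec R,\, U\prec R\}$, which is already the three-element poset with maximum $R$ and incomparable minima $G,U$; adjoining $U<_r R$ when it is not induced only enlarges the indexing poset, which preserves the Morse-decomposition axioms. Invoking the coarsening fact then gives that $\{M(G),M(U),M(R)\}$, with $M(G)=\bigcup_{g\in G}M(g)$ and $M(U)=\bigcup_{u\in U}M(u)$ attractors (both blocks are down-sets) and $M(R)$ the isolated invariant set of the convex block $R$, is a Morse decomposition indexed by $\sR_{\sMG}$, which is the desired Morse graph.

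I expect the main obstacle to be the last link rather than the order theory: making precise that the coarsening operation is valid at the level of the \emph{continuous} system $\phi_\tau$, not just the combinatorial map $\cF$. This relies on the already-established correspondence (via the outer-approximation property $\cF_{min}(\xi)\subset\cF(\xi)$) between the combinatorial Morse graph of $\cF$ and genuine Morse decompositions of $\phi_\tau$: one needs the regrouped combinatorial Morse sets to still isolate the corresponding invariant sets of $\phi_\tau$ with the stated reachability between them. Modulo that correspondence, which is standard in this framework, everything else is the elementary bookkeeping in (a) and (b).
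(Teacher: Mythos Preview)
Your proposal is correct and takes essentially the same approach as the paper. The paper's proof simply invokes an order-retraction/coarsening result from \cite{order_retraction} and states that it suffices to observe that the identification map $\rho\colon(\sMG(\cF),<)\to(\{G,U,R\},<_r)$ is order-preserving, leaving that verification to the reader; your items (a) and (b) are precisely that check carried out in full.
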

\begin{proof}
By \cite{order_retraction} it is sufficient to observe (we leave this to the reader) that the map $\rho\colon (\sMG(\cF),<) \to (\{G, U, R\}, <_r)$ given by identification of elements of $\sMG(\cF)$ with elements of $\sR_{\sMG}$ is an order-preserving map.
\end{proof}

Fig. \ref{fig:order_retraction} depicts the Morse graph $\sR_{\sMG} = ({G, U, R}, <r)$ from Theorem \ref{thm:order_retraction}, which is obtained from a Morse graph of the humanoid system exemplified by Fig. \ref{fig:overview}, where $G=\{5\}$, $U=\{0,1,2,3,4,7,8\}$ and $R=\{6,9,10,11\}$. $\sR_{\sMG}$ is also shown in Fig. \ref{fig:overview}.

\noindent {\bf 4. Obtaining the Regions of Attraction} Theorem 1 in \cite{morsegraph} guarantees the existence and uniqueness of \roa$(A)$ for attractors $A$ of $\sMG(\cF)$ and Alg. 2 in \cite{morsegraph} produces \roa$(A)$. The following theorem ensures that the \roa s found by Algorithm 2 can be used to obtain \roa s associated to the Morse graph $\sR_{\sMG}$ from Theorem \ref{thm:order_retraction}. The proof is a consequence of Alg. 2\cite{morsegraph} and Theorem \ref{thm:order_retraction} here.

\begin{thm}
    Let $\cF$ be an outer approximation  of the learned dynamics $\phi^Z_\tau$, $\sMG(\cF)$ be the Morse graph for $\cF$, and $\sR_{\sMG}=(\{G, R, U\}, <_r)$ be the Morse graph from Theorem \ref{thm:order_retraction}. Then $\roa(A)$ and $\roa(U)$ computed via $\sMG(\cF)$ are also regions of attraction of $G$ and $U$ in $\sR_{\sMG}$, respectively.
    
\end{thm}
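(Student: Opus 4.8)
The plan is to reduce the claim to two facts already in hand: Theorem~1 and Algorithm~2 of \cite{morsegraph}, which establish the existence, uniqueness, and construction of the region of attraction of an attractor of a combinatorial multivalued map; and Theorem~\ref{thm:order_retraction} here, which says that $\sR_{\sMG}$ is a bona fide Morse graph for $\cF$, obtained from $(\sMG(\cF),<)$ by the order-preserving identification $\rho$. The conceptual glue is that a region of attraction of an attracting set $\cA$ is an \emph{intrinsic} object of the combinatorial system $(\cZ,\cF)$: the defining requirements — that it be a neighborhood of $\cA$ contained in the basin of attraction of $\cA$ — refer only to $\cZ$, $\cF$, and $\cA$ viewed as a subcollection of $\cZ$, and not to the particular Morse graph (level of resolution) used to record the dynamics.

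First I would check that the subsets $G$ and $U$ of $\sMG(\cF)$ are order ideals (down-sets) of $(\sMG(\cF),<)$. For $G$ this is immediate since $G$ consists of minimal nodes. For $U=\sMG(\cF)\setminus(G\cup R)$, take $b\in U$ and $b'<b$: if $b'\in G$ then $b\in R$ by the definition of $R$, while if $b'\in R$ then $g<b'<b$ for some $g\in G$, whence again $b\in R$; either case contradicts $b\in U$, so $b'\in U$. A down-set of a Morse decomposition determines an attractor of $\cF$ (its Morse sets together with the connecting trajectories among them); let $\cA_G$ and $\cA_U$ be the attracting sets determined by $G$ and $U$. By Theorem~1 of \cite{morsegraph} (applied to the attractors of $\cF$ determined by these order ideals), $\cA_G$ and $\cA_U$ have unique regions of attraction, which Algorithm~2 of \cite{morsegraph}, run on $(\cZ,\Phi_\tau(\cZ),L)$, produces; these are the collections of cubes called $\roa(A)$ (the region of attraction of $\cA_G$) and $\roa(U)$ in the statement.

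Next I would match these attracting sets with the Morse sets of $\sR_{\sMG}$. By Theorem~\ref{thm:order_retraction}, $G$ and $U$ are precisely the two minimal nodes of the Morse graph $\sR_{\sMG}$, and because $\rho$ merely identifies nodes of $\sMG(\cF)$ with nodes of $\sR_{\sMG}$ (with $\rho^{-1}(G)=G$ and $\rho^{-1}(U)=U$), the Morse set carried by the node $G$ of $\sR_{\sMG}$, as a subcollection of $\cZ$, is exactly the attractor $\cA_G$, and the node $U$ carries $\cA_U$. Since $\sR_{\sMG}$ is a valid Morse graph for $\cF$, $\cA_G$ and $\cA_U$ are attracting sets of $\cF$ in the sense of \cite{morsegraph}, so by the intrinsic characterization from the first paragraph the collections $\roa(A)$ and $\roa(U)$ produced above — being neighborhoods of $\cA_G$, $\cA_U$ inside their basins, a property depending only on $(\cZ,\cF)$ — are regions of attraction of the nodes $G$ and $U$ of $\sR_{\sMG}$, which is the assertion.

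The step I expect to demand the most care is this matching together with the intrinsic-ness argument: one must be certain that collapsing nodes of $\sMG(\cF)$ along $\rho$ leaves the underlying attracting subsets of $\cZ$ (and hence their basins, which are determined by $\cF$) unchanged, so that ``region of attraction of $G$ in $\sR_{\sMG}$'' and ``region of attraction of the order ideal $G$ computed in $\sMG(\cF)$'' name the same set of cubes. This is exactly where Theorem~\ref{thm:order_retraction} — that $\rho$ is order-preserving, so that $\sR_{\sMG}$ is a Morse graph and not merely a quotient poset — is indispensable, and it is the content of the order-retraction machinery of \cite{order_retraction}; the remainder is routine bookkeeping with the constructions of \cite{morsegraph}.
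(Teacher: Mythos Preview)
Your proposal is correct and follows essentially the same approach as the paper, which states only that the result is a consequence of Algorithm~2 of \cite{morsegraph} together with Theorem~\ref{thm:order_retraction}. You have simply unpacked this one-line justification: verifying that $G$ and $U$ are down-sets of $(\sMG(\cF),<)$, that the attracting subsets of $\cZ$ they determine are unchanged under the order-retraction $\rho$, and hence that the regions of attraction produced by Algorithm~2 via $\sMG(\cF)$ coincide with those of the minimal nodes $G$ and $U$ in $\sR_{\sMG}$.
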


After applying the proposed method to the pedagogical example described in Section \ref{sec:prelims} and Fig. \ref{fig:bistable}, the bistability is obtained in the latent space as shown in Fig. \ref{fig:bistable}(right). Notice that the level of discretization required to construct the Morse graph for the original space $X$ is quite extensive, amounting to $5\times 3^N$, approximately 2 million cubes for $N=12$, which in general is not well suitable for complex, high-dim. systems. When $\sR_{\sMG}$ is computed on the latent space, however, the size of the required discretization is exponentially reduced. For instance, in Fig.\ref{fig:bistable}(right) the discretization level is $2K$ cubes. This is a reduction of 3 orders of magnitude compared to the smallest discretization needed on $X$ to find the bistability of Fig. \ref{fig:bistable}(left).

\section{Experimental Evaluation}
\label{sec:experiments}

\noindent{\bf Systems:} Fig.~\ref{fig:systems} shows 3 of the considered systems in the experiments in addition to the humanoid of Fig. \ref{fig:overview}. In particular, the experiments consider: (1) A model of a \textbf{Pendulum} (Pend) observed via the coordinates and velocity of its mass $[x,y,\dot{x},\dot{y}]$. (2) A \textbf{Cartpole} (CaPo) simulated using MuJoCo \cite{tunyasuvunakool2020} with state space $[x, \dot{x}, \cos \theta, \sin \theta, \dot{\theta}]$. (3)  The \textbf{Humanoid} (GetUp) benchmark borrowed from the literature \cite{getupcontrol} corresponds to a bipedal humanoid robot attempting a stable standup gait. (4) \textbf{TriFinger Robot Hand} (TriFi) is a real-world dataset of 3 fingers pushing a cube towards a desired location. \cite{guertler2023benchmarking}.

\begin{figure}[h!]
\vspace{-.15in}
    \centering
    \begin{subfigure}{.325\columnwidth}
    \includegraphics[width=\textwidth]{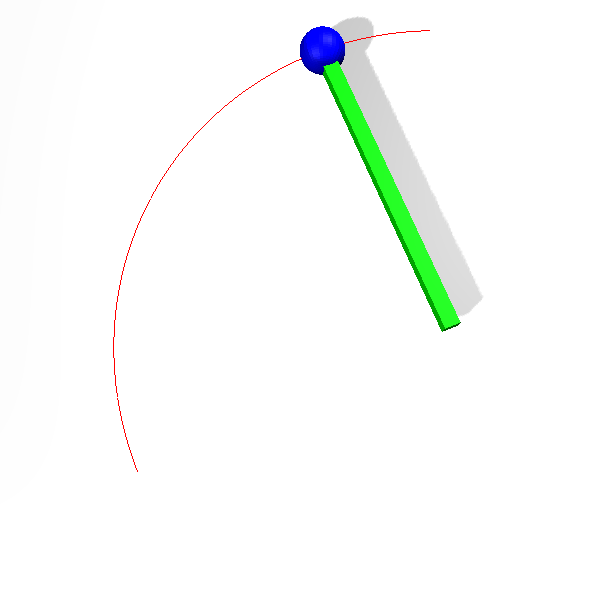}
    \end{subfigure}
    \begin{subfigure}{.325\columnwidth}
    \includegraphics[width=\textwidth]{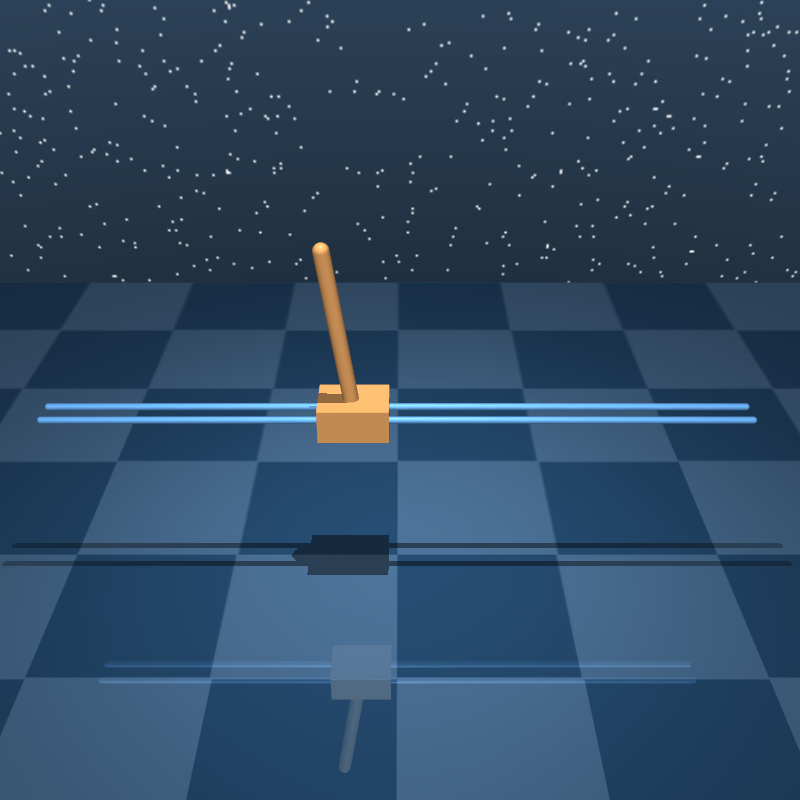}
    \end{subfigure}
    \begin{subfigure}{.325\columnwidth}
    \includegraphics[width=\textwidth,height=\textwidth]{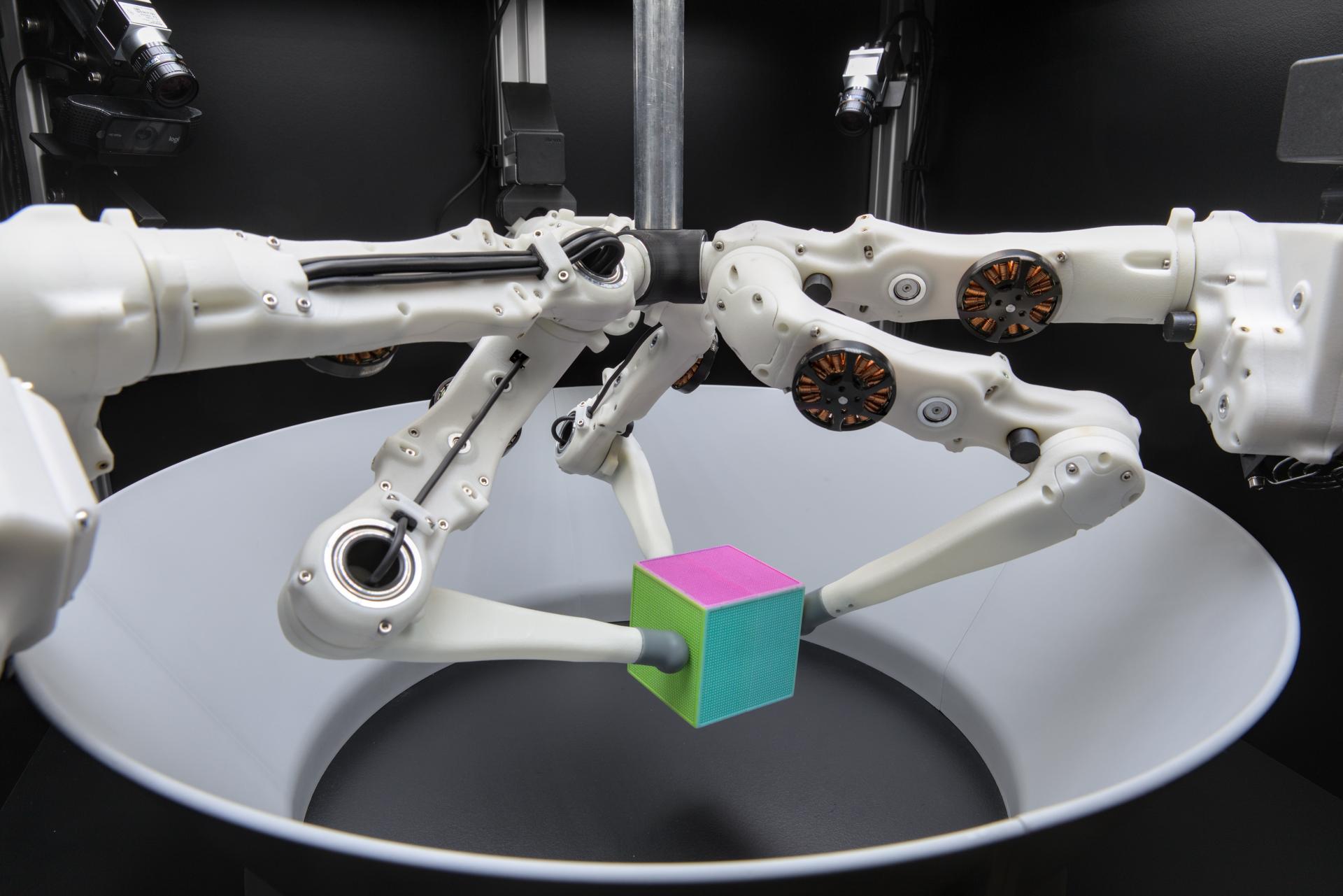}
    \end{subfigure} 
    \caption{\small (L-R) Analytical  Pendulum, Cartpole simulated using MuJoCo, real robot dataset collected using a TriFinger \cite{guertler2023benchmarking}.}
\vspace{-.2in}
    \label{fig:systems}
\end{figure}

\noindent{\bf Controllers:} Both analytical and learned controllers are considered. {\tt LQR} linearizes the system to compute a gain $k$ used in the control law $u(x_t) = -kx_t$. It is applied on the Pend and CaPo systems. Soft Actor-Critic ({\tt SAC}) \cite{Haarnoja2018SoftAO} and Proximal Policy Optimization ({\tt PPO}) \cite{schulman2017proximal} are deep reinforcement learning algorithms trained to maximize the expected return $\mathbb{E}_{x_0}[\sum_{t=0}^{t_\text{max}} \gamma^t \mathcal{R}(x_t, u_t)]$, where $\mathcal{R}: X \times U \mapsto \mathbb{R}$ is a reward function that encodes the goal of the desired task and $\gamma$ is a discount factor. They are applied on the GetUp and TriFi systems, respectively. 



\noindent \textbf{Setup:} For the simulated systems, the datasets $\mathcal{D}$ are obtained by rolling out trajectories using the controller. For TriFi, the dataset is fixed. Given labeled training and test datasets ($\mathcal{D}_{tr},\mathcal{D}_{te}$ - random split 4:1 of trajectories), the autoencoder is trained on $\mathcal{D}_{tr}$ with multiple random seeds for the same hyperparameters and dim($Z$) = 2. $h_\text{enc}, h_\text{dec}, h_\text{dyn}$ are fully-connected multi-layered perceptions (MLPs) with either 2 or 3 hidden layers depending on the benchmark. All trials where \name \ discovered less than 2 attractors were restarted. From the set of desirable final conditions in the training set $\mathbf{F}_f^{tr}$ (as in Section~\ref{sec:network-arch}), all \roa s containing all the points $\{h_\text{enc}(x) \vert x \in \mathbf{F}_f^{tr}\}$ are identified as the desired \roa s. Define $\mathbf{I}^{te}_s$ as the set of initial conditions in $\mathcal{D}_{te}$ that succeed in the task and $\hat{\mathbf{I}}^{te}_s$ as the set of initial conditions in $\mathcal{D}_{te}$ identified by \name\ to be inside the desired \roa. The sets of unsuccessful initial conditions $\mathbf{I}^{te}_f, \hat{\mathbf{I}}^{te}_f$ are similarly defined. Table~\ref{tab:quant} reports the precision ${\tt P} = \frac{\vert \mathbf{I}_s^{te} \cap \hat{\mathbf{I}}_s^{te} \vert}{\vert \hat{\mathbf{I}}_s^{te} \vert}$, recall ${\tt R} = \frac{\vert \mathbf{I}_s^{te} \cap \hat{\mathbf{I}}_s^{te} \vert}{\vert \mathbf{I}_s^{te} \vert}$, and F-score ${\tt F} = \frac{2{\tt P}{\tt R}}{{\tt P + R}}$. It also reports the number of trajectories $\vert \pi_{tr} \vert$ used for training the autoencoder and the number $\vert \mathcal{D}_{tr} \vert$ of $(x,\text{Im}(x))$ pairs obtained by applying a sliding window to the trajectory. All metrics are reported on the testing set $\mathcal{D}_{te}$ after selecting the best-performing hyperparameters on $\mathcal{D}_{tr}$.

\begin{table}[ht]
    \vspace{-.1in}
    \centering
    \begin{tabular}{|c|c|c|c|c|c|c|}
    \hline
     \textbf{Benchmark} & \textbf{dim}($X$) & $\vert \pi_{tr} \vert$ & $\vert \mathcal{D}_{tr} \vert$ &   {\tt P} & {\tt R} & {\tt F}  \\ \hline 
     Pend (LQR) & 4 & 1024 & 20,480 & 94\% & 85\%  & 89\%   \\ \hline \hline
CaPo (LQR) & 5 & 1440 & 143,281  & 86\% & 77\% & 81\% \\ \hline
GetUp (SAC) & 67& 1000 & 326,384 & 91\% & 91\%  & 91\%   \\ \hline
TriFi (PPO) & 96 & 3072 & 460,806 & 90\% & 97\%  & 93\%    \\ \hline
    \end{tabular}
    \vspace{-.05in}
    \caption{\small \name\ performance on benchmarks for dim($Z$) = 2.}
    \vspace{-.15in}
\label{tab:quant}
\end{table}

For  Pend (LQR) and CaPo (LQR), $15-35\%$ of initial conditions in $\mathbf{I}^{tr}_s$ succeed in the task. The learned controllers for GetUp (SAC) and TriFi (PPO) are more successful ($80-90\%$ success rate).  Across the benchmarks, \name\ returns rather accurate estimates of the desired \roa.  



\noindent \textbf{Data Efficiency:} Table~\ref{tab:data-efficiency} varies the number of trajectories used to train the autoencoding network for the Pend (LQR) benchmark (as a ratio of the available data). Increasing the data used by \name \ during the training phase results in an overall improvement in the desired \roa \ estimate. 

\begin{table}[h!]
    \vspace{-.1in}
\begin{minipage}{0.48\columnwidth}
    \centering
    \begin{tabular}{|c|c|c|c|}
    \hline
        \textbf{Size} & {\tt P} & {\tt R} & {\tt F}  \\ \hline
         10\% & 71\% & 58\% & 64\% \\ \hline 
         50\% & 95\% & 79\% & 86\% \\ \hline 
         100\% & 94\% & 85\% & 89\% \\ \hline 
    \end{tabular}
    \vspace{-.05in}
    \caption{\small Data-efficiency for \roa\ estimate of Pend (LQR).}
    \label{tab:data-efficiency}
\end{minipage}%
\hspace{.025in}
\begin{minipage}{0.48\columnwidth}
    \centering
    \begin{tabular}{|c|c|c|c|}
    \hline
        \textbf{Multiplier} & {\tt P} & {\tt R} & {\tt F}  \\ \hline
        $1\times$ & 91\% & 91\% & 91\% \\ \hline 
        $2\times$ & 92\% & 56\% & 70\% \\ \hline 
        $4\times$ & 92\% & 48\% & 63\% \\ \hline 
    \end{tabular}
    \vspace{-.05in}
    \caption{\small Ablation on parameter $L$ for GetUp (SAC).}
    \label{tab:lipschitz}
\end{minipage}%
    \vspace{-.15in}
\end{table}

\noindent \textbf{Selecting a suitable $L$-value:} Table~\ref{tab:lipschitz} varies the parameter $L$ that bounds the Lipschitz constant for GetUp (SAC). Increasing $L$ makes the approach more conservative in accepting positives. This improves precision by reducing the occurrence of False Positives near the \roa's boundary. It comes at the cost of reduced recall, however. 


\begin{table}[h]
  \centering
  \begin{tabular}{|l|ccc|ccc|}
    \hline
    \multicolumn{1}{|c|}{} & \multicolumn{3}{c|}{\textbf{Unlabeled}} & \multicolumn{3}{c|}{\textbf{Labeled}} \\
    \hline
    \textbf{Benchmark} & \texttt{P} & \texttt{R} & \texttt{F} & \texttt{P} & \texttt{R} & \texttt{F} \\
    \hline
    Pend (LQR) & 94\% & 85\% & 89\% & 92\% & 59\% & 72\%  \\ \hline
CaPo (LQR) & 85\% & 76\% & 80\% & 86\% & 77\% & 81\%  \\ \hline
GetUp (SAC) & 91\% & 91\% & 91\% & 82\% & 76\% & 79\%  \\ \hline
TriFi (PPO) & 90\% & 97\% & 93\% & 90\%  & 94\% & 92\% \\ \hline
  \end{tabular}
  \vspace{-.05in}
  \caption{\small Impact of loss term $L_4$, which requires labeling.}
\label{tab:l4-loss}
\vspace{-.15in}
\end{table}

\noindent \textbf{Effect of supervised loss objective: } Table~\ref{tab:l4-loss} reports the accuracy of the \roa\ estimate when the loss function using labeled data as defined in Eqn.~\ref{eqn: l4} is also used. The effect is minimal indicating that \name\ does not need any labels for which demonstration trajectories succeeded or not.


\noindent \textbf{Dimensionality of Latent Space:} Tables~\ref{tab:latent-dim-pendulum}, \ref{tab:latent-dim-humanoid} study the impact of $dim(Z)$ for Pend (LQR) and GetUp (SAC). \name \  recovers a significant portion of the \roa \ for Pend (LQR) even for dim($Z$) = 1 but the estimate improves for dim($Z$) = 2 as the true dynamics lie in a 2-dim. space. For GetUp (SAC), the {\tt R} and {\tt F} scores improve for dim($Z$)=3. But this is due to the entire valid $Z$ returned as the \roa \, leading to more false positives and lower precision.

\begin{table}[h!]
    \vspace{-.1in}
\begin{minipage}{0.48\columnwidth}
    \centering
    \begin{tabular}{|c|c|c|c|}
    \hline
        \textbf{dim}($Z$) & {\tt P} & {\tt R} & {\tt F}  \\ \hline
         1 & 82\% & 86\% & 84\% \\ \hline 
         2 &  94\% & 85\% & 89\% \\ \hline 
    \end{tabular}
    \vspace{-.05in}
    \caption{\small Impact of dim($Z$) for Pend (LQR) \roa.}
    \label{tab:latent-dim-pendulum}
\end{minipage}%
\hspace{.05in}
\begin{minipage}{0.48\columnwidth}
    \centering
    \begin{tabular}{|c|c|c|c|}
    \hline
       \textbf{dim}($Z$) & {\tt P} & {\tt R} & {\tt F}  \\ \hline
         2 & 91\% & 91\% & 91\% \\ \hline 
         3 & 89\%  & 100\%  & 94\%  \\ \hline 
    \end{tabular}
    \vspace{-.05in}
    \caption{\small Impact of dim($Z$) for GetUp (SAC) \roa.}
    \label{tab:latent-dim-humanoid}
\end{minipage}
    \vspace{-.15in}
\end{table}

\noindent \textbf{Weaker controllers:} Table~\ref{tab:weak} reports accuracy for more complex versions of GetUp (SAC) and TriFi (PPO), where the controller works in $50-65\%$ of $X$. For GetUp, the humanoid's max velocity is limited. So some trajectories are erroneously misclassified as failures as they do not stabilize within the specified horizon, impacting the accuracy of \name. The considered version of TriFi requires lifting and transporting the cube to a target instead of the earlier task of pushing. The cube's pose is tracked by noisy cameras, and some critical features may be missing from the dataset resulting in a less accurate $h_\text{dyn}$. On these benchmarks, \name\ achieves high recall at the cost of misclassifying multiple regions of the space as false positives. 

\begin{table}[ht]
    \vspace{-.15in}
    \centering
    \begin{tabular}{|c|c|c|c|c|c|c|}
    \hline
     \textbf{Benchmark} & \textbf{dim}($X$) & $\vert \pi_{tr} \vert$ & $\vert \mathcal{D}_{tr} \vert$ &   {\tt P} & {\tt R} & {\tt F}  \\ \hline 
GetUp (SAC) & 67 & 1000 & 326,384 & 78\% & 96\%  & 86\%   \\ \hline
TriFi (PPO) & 54 & 1915 & 574,831 & 65\%   & 100\%   & 78\%   \\ \hline
    \end{tabular}
    \vspace{-.05in}
    \caption{\small Quantitative evaluation for more complex versions of GetUp (SAC) and TriFi (PPO) (dim($Z$) = 2).}
\label{tab:weak}
    \vspace{-.2in}
\end{table}

\section{Discussion}
\label{sec:conclusion}


\name\ assumes the dataset $\mathcal{D}_\text{tr}$ covers enough initial conditions close to the boundary of success/failure \roa s to approximate the true dynamics. Otherwise, the approach cannot discover the separatrix of the bistable dynamics and discovers only a single attractor. Furthermore, \name\ does not use the decoder $h_\text{dec}$ to obtain the \roa. Future work will explore using $h_\text{dec}$ in the high-dim. space with low-accuracy predictions to mitigate False Positives. Finally, \name\ does not provide a conservative estimate of the ground truth \roa\ due to the stochasticity in training the autoencoder. It is interesting to explore further how to maximize precision.  

The \roa s discovered by \name\ for different controllers of the same robotics system can be used to compose a hybrid solution that succeeds in the task from a wider swath of the state space. Given a fixed dataset of robotics transitions, \name\ can discover the regions of the robot's dynamics from where task success is feasible, which has applications in both safe controller learning \cite{gu2023review} and safe motion planning \cite{Knuth-ICRA-23}. Future directions include learning safety regions for constrained systems \cite{dawson2023safe} and deployment in real robotic systems, where challenges related to system dynamics approximation and data efficiency must be addressed.



\bibliographystyle{IEEEtran}
\bibliography{refs.bib}

\end{document}